\DeclareMathOperator*{\argmax}{argmax}
\newtheorem{definition}{Definition}
\newtheorem{theorem}{Theorem}
\newtheorem{lemma}{Lemma}
\newtheorem{remark}{Remark}
\let\NAT@parse\undefined
\begin{document}

\title{\LARGE \bf Temporally Robust Multi-Agent STL Motion Planning \\ in Continuous Time}


\author{Joris Verhagen$^{1}$, Lars Lindemann$^{2}$ and Jana Tumova$^{1}$
\thanks{This work was partially supported by the Wallenberg AI, Autonomous Systems and Software Program (WASP) funded by the Knut and Alice Wallenberg Foundation.}
\thanks{$^{1}$Joris Verhagen and Jana Tumova are with the Division of Robotics, Perception and Learning, School of Electrical Engineering and Computer Science, KTH Royal Institute of Technology, Stockholm, Sweden \{\tt\small jorisv, tumova\} @kth.se }
\thanks{$^{2}$Lars Lindemann is with the Thomas Lord Department of Computer Science, University of Southern California, Los Angeles CA, USA \tt\small llindema@usc.edu }
}

\maketitle
\thispagestyle{empty}
\pagestyle{empty}

\begin{abstract}
Signal Temporal Logic (STL) is a formal language over continuous-time signals (such as trajectories of a multi-agent system) that allows for the specification of complex spatial and temporal system requirements (such as staying sufficiently close to each other within certain time intervals). 
To promote robustness in multi-agent motion planning with such complex requirements, we consider motion planning with the goal of maximizing the temporal robustness of their joint STL specification, i.e. maximizing the permissible time shifts of each agent's trajectory while still satisfying the STL specification.
Previous methods presented temporally robust motion planning and control in a discrete-time Mixed Integer Linear Programming (MILP) optimization scheme. In contrast, we parameterize the trajectory by continuous B\'ezier curves, where the curvature and the time-traversal of the trajectory are parameterized individually. 
We show an algorithm generating continuous-time temporally robust trajectories and prove soundness of our approach.
Moreover, we empirically show that our parametrization realizes this with a considerable speed-up compared to state-of-the-art methods based on constant interval time discretization.
\end{abstract}


\section{Introduction}
\label{sec:introduction}
Robustness of motion plans is often associated with the extent to which a signal can be spatially perturbed while still guaranteeing safety or desired performance. An example of this is specifying a robot to stay away from a wall where a door might suddenly open. Robustness is then increased by increasing the distance to the wall, promoting safety in such unforeseen events. 
We can however imagine scenarios where robustness is needed against temporal perturbations. This becomes especially apparent in multi-robot systems where robots need to collaborate or avoid each other, and where individual delays may result in task violation. 
Consider for example a robot-to-robot handover task where two robots should stay close enough for the handover to happen. 
During the execution of a pre-planned motion strategy, unmodeled dynamics, environmental factors, localization issues, and even onboard timing issues may cause one robot to be delayed, after which the other robot has already departed to fulfill the next part of its specification. An example of this is shown in Fig. \ref{fig:intro}.
In these scenarios, it is clear that the temporal robustness of motion plans is of vital importance as it allows agents to arrive earlier or later than planned while still meeting a specification.
Specifically towards multi-robot systems, we are interested in the maximization of Asynchronous Temporal Robustness (ATR) \cite{lindemann2022temporal}, which considers the time shifts of individual agents within a predicate, i.e. agents being delayed or arriving early. We subsequently wish to generate feasible motion plans for multi-agent systems that maximize the ATR of a spatio-temporal specification. 

\begin{figure}[t!]
    \centering
    \includegraphics[width=0.35\textwidth]{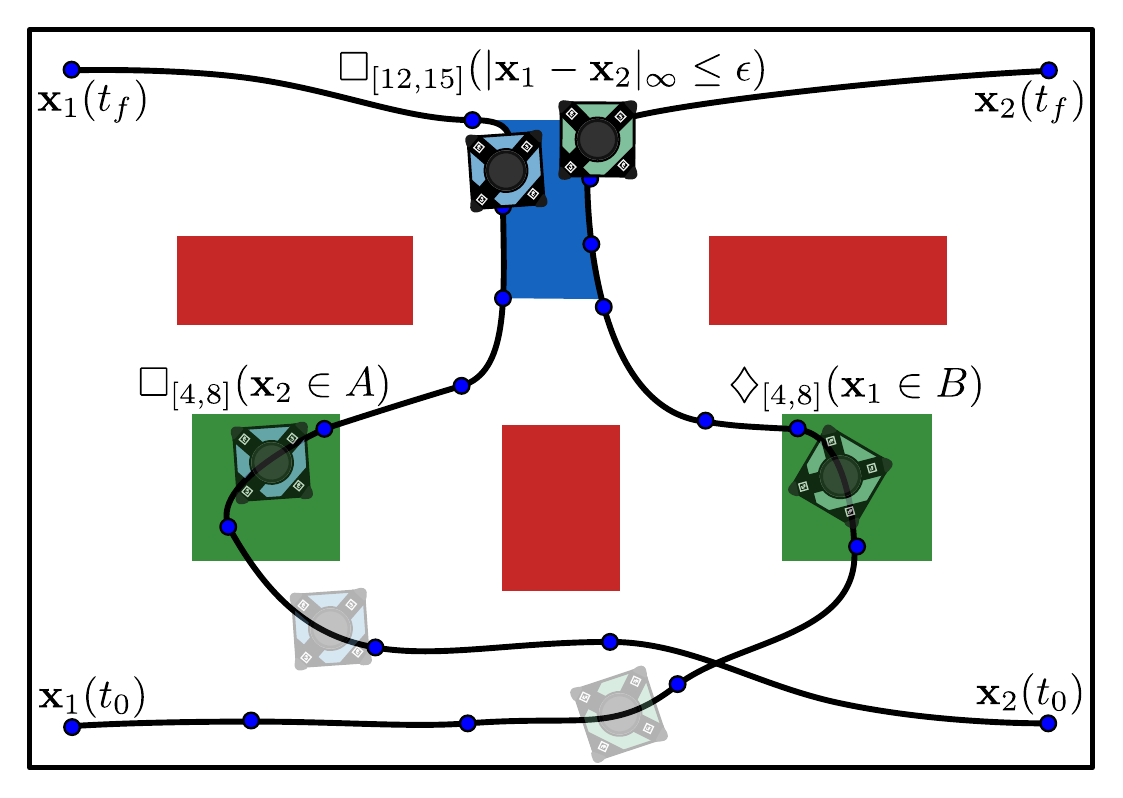}
    \caption{An example of a multi-agent STL specification. Two robots are required to be close enough to each other for a predetermined amount of time (blue) in addition to an individual spatio-temporal requirement (green). To ensure the time robustness of the collaborative task, we need to consider time shifts on both trajectories.}
    \label{fig:intro}
    \vspace{-0.5cm}
\end{figure}


\subsection{Contributions}
\label{ssec:contributions}
In this work, we consider double-integrator, continuous-time multi-agent systems subjected to a fragment of bounded time STL with linear predicates. We develop upon existing works that consider discrete-time temporal robustness via the constant-interval discretization of a system's trajectory. 
However, in contrast to state of the art, we optimize the continuous-time temporal robustness. 
The contributions of our work are summarized as follows:
\begin{itemize}
    \item We present an efficient multi-agent B\'ezier curve parametrization of the system's trajectory that allows for reasoning over continuous space and time.
    We empirically demonstrate that this continuous time may lead to the theoretically optimal temporal robustness of a motion plan.
    \item We propose a Mixed-Integer Linear Program (MILP) that finds a continuous multi-agent motion plan that is robust to asynchronous time shifts, and we show its soundness.
    \item We provide a theoretical analysis of computational complexity and empirical evidence of speed-up compared to methods based on time discretization.
\end{itemize}

\subsection{Related Work}
\label{ssec:related_work}
To formulate tasks in both spatial and temporal domains, temporal logics such as Linear Temporal Logic (LTL) have been employed. 
Although LTL allows temporal ordering, it does not allow the user to reason over time quantitatively.
Addressing these limitations are methods such as Metric Temporal Logic and Signal Temporal Logic (STL) \cite{maler2004monitoring}. 
As STL allows reasoning over continuous-time signals, we are interested in maximizing the temporal robustness of specifications that consider a fragment of this temporal logic. 

As STL can account for continuous signals, it allows for direct reasoning over the robustness of spatio-temporal missions. 
Many existing works consider spatial robustness of STL specifications in motion planning \cite{raman2014model,mehdipour2019average,vahs2023risk} based on the definition of robust semantics in \cite{fainekos2009robustness} and \cite{donze2010robust}. 
In time-critical systems, however, it is not enough to consider spatial robustness. 
In collaborative multi-robot systems, for example, we wish to consider maximizing temporal robustness, which has been given less attention. 
Work in \cite{lin2020optimization} defines temporal robustness on predicate shifts on a small fragment of STL. 
Full (time-bounded) STL with predicate shifts is considered in \cite{rodionova2021time,rodionova2022temporal}. 
Although it does not consider individual shifts of signals within a predicate, the work in \cite{rodionova2022combined} combines left- and right-temporal robustness (accommodating advancements and delays respectively) to generate motion plans that are robust to time shifts in both directions. 
Asynchronous Temporal Robustness (ATR), formally defined in \cite{lindemann2022temporal}, is used as a constraint in \cite{yu2023efficient}. 
In a similar vein, the authors in \cite{sahin2017synchronous,sahin2019multirobot} present counting LTL (cLTL) which is able to bound asynchronous shifts in discrete space and time. Related is also the work in \cite{vasile2017time} which concerns temporal relaxations.
In contrast, we aim to not just constrain the ATR, but to directly maximize it.
Further, all these works rely on a constant-time discretization of the trajectory, fundamentally limiting the resolution of both the trajectory and its temporal robustness while also requiring a high number of variables for long-horizon missions. To this end, we represent trajectories with independent space- and time parametrization.

Multi-agent STL motion planning with independent space and time parametrization was introduced in \cite{sun2022multi} where piecewise linear segments are concatenated to satisfy qualitative semantics of STL specifications. 
The trajectory formulation in this work generalizes this by using a B\'ezier parametrization from \cite{marcucci2022motion} in a multi-agent context that allows for non-constant velocity segments, continuous differentiability, and, as we will show, quantitative temporal semantics of STL. 
\section{Preliminaries}
\label{sec:preliminaries}
Let $\mathbb{R}$ and $\mathbb{N}$ be the set of real and natural numbers including zero, respectively, and let $\mathbb{R}_{\ge 0}$ denote the set of real, non-negative numbers.
Let $\mathbb{B} = \{\top,\bot\}$, where $\top$ and $\bot$ indicate the Boolean true and false values.
Additionally, let $\mathbf{x}(t) \in \mathbb{R}^{\text{dim}\cdot n}$ be the state of a multi-agent system at time $t \in \mathbb{R}_{\ge 0}$ 
where $\mathbf{x}_k(t) \in \mathbb{R}^{\text{dim}}, k \in \{1,\ldots,n\}$ is regarded as the state of agent $k$, and $n$ is the number of agents. 
The mapping $\mathbf{x}: \mathbb{R}_{\ge 0} \rightarrow \mathbb{R}^{\text{dim}\cdot n}$ is a \emph{signal}. 
Lastly, let $\mathbb{R}^{p\times q}$ be a $p$ by $q$ matrix of real numbers.

\begin{figure}[t!]
    \centering
    \includegraphics[width=0.5\textwidth]{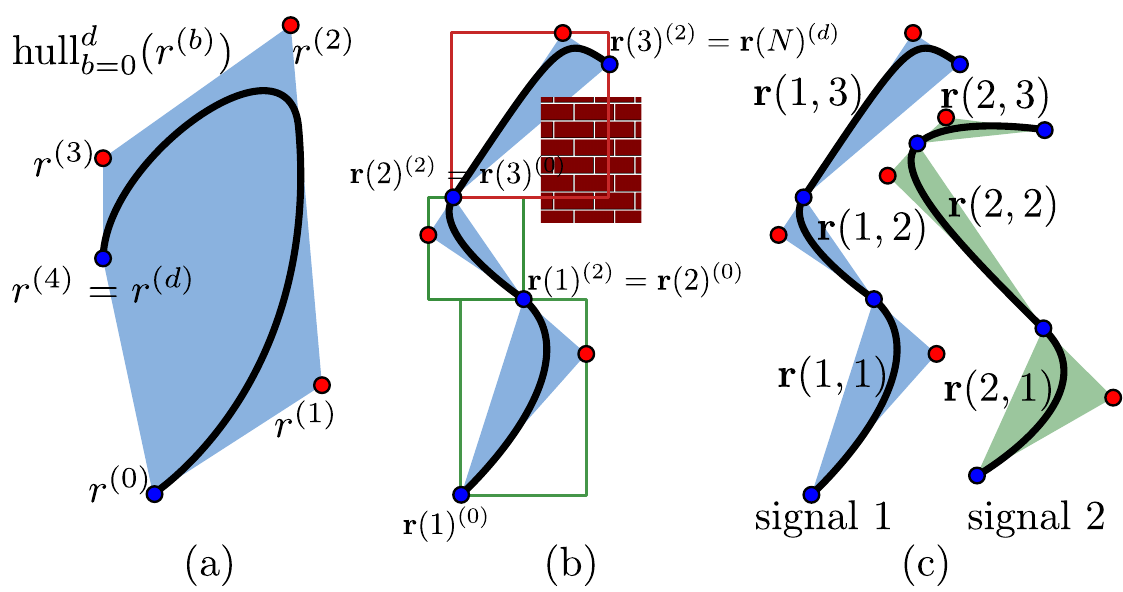}
    \caption{(a) A B\'ezier curve with its control points and its convex hull (in light blue). $r^{(b)}$ indicates the $b$'th control point of $r$. (b) A concatenation of 3 B\'ezier curves with their convex hulls. Consideration of the bounding box for obstacle avoidance. $\mathbf{r}_{(2)}$ is not collision-free according to the conservativeness of Eq. \eqref{eq:constraint:w_free}. (c) Multi-agent B\'ezier parametrization indicating segments of $\mathbf{r}_k$ for $k=1$ and $k=2$.}
    \label{fig:beziers}
    \vspace{-0.5cm}
\end{figure}

\subsection{B\'ezier curves}
\label{ssec:bezier_curves}
Our approach relies on B\'ezier curves 
that have found many use cases in motion planning of autonomous systems \cite{qian2016motion,marcucci2022motion}. 
A B\'ezier curve is represented by a polynomial equation and parameterized by a finite number of control points, its decision variables. 
Namely, a B\'ezier curve $r$ of degree $d$ is constructed and evaluated using the summation of its $d$ Bernstein polynomials multiplied with their respective $d$ control points according to
\begin{equation}
    r(s) := \sum_{b=0}^{d}\begin{pmatrix}d \\ b\end{pmatrix}(1-s)^{d-b}s^b \cdot r^{(b)}, \quad s\in[0,1],
\end{equation}
where $s$ is the phasing parameter, $\begin{pmatrix}d \\ b\end{pmatrix}(1-s)^{d-b}s^b$ is the $b$'th Bernstein polynomial, 
and $r^{(b)}$ is the $b$'th control point. 
An example of a B\'ezier curve with its control points is shown in Fig. \ref{fig:beziers}a.
Although B\'ezier curves are nonlinear in nature, the following properties allow us to 
reason over their convex over-approximation: 
\begin{enumerate}
    \item \textit{Convex hull:} the curve $r(s)$ is entirely contained within the convex hull generated by its control points $r^{(b)}$, $r(s) \in \mathrm{hull}_{b=0}^{d}(r^{(b)}) \quad \forall s \in [0,1]$, shown in Fig. \ref{fig:beziers}a.
    \item \textit{Endpoint values:} the curve $r(s)$ starts at the first control point $r^0$ as $s=0$ and ends at the last control point $r^{(d)}$ as $s=1$. 
    \item \textit{Derivatives:} the derivative $\dot{r}(s) = \frac{dr(s)}{ds}$ is a linear combination of the control points of $r(s)$ and is again a B\'ezier curve of degree $d-1$ with control points $\dot{r}^{(b)} = d\cdot(r^{(b+1)} - r^{(b)})$ for $b=\{0,...,d-1\}$.
\end{enumerate}
Subsequently, the derivatives of the start- and end-point, $\dot{r}^{(0)}$ and $\dot{r}^{(d-1)}$, are linear combinations of the control points of $r(s)$ and the convex hull of $\dot{r}$ is formed as a linear combination of its control points.
This means that smooth, kinematically feasible, and collision-free trajectories can be constructed through Linear Programming by returning a finite set of control points of a B\'ezier curve.

\subsection{Signal Temporal Logic}
\label{ssec:signal_temporal_logic}
Signal Temporal Logic (STL) \cite{maler2004monitoring} 
considers real-valued signals making it a powerful tool for specifying and verifying properties of dynamical systems.
Let us first define a fragment of STL that we use in this work that specifies desired properties of $n$-dimensional, finite, continuous-time signals $\mathbf{x}:\mathbb{R}_{\ge 0} \rightarrow X \subseteq \mathbb{R}^n$.

\begin{definition}[Fragment of Signal Temporal Logic]
\label{def:stl}
    Let bounded time intervals $I$ be in the form $[t_1,t_2]$, where, for all, $I \subset \mathbb{R}_{\ge 0}$, $t_1,t_2 \in \mathbb{R}_{\ge 0}, t_1 \leq t_2$. 
    Let $\mu:X \rightarrow \mathbb{R}$ be a linear real-valued function, and let $p:X \rightarrow \mathbb{B}$ be a linear predicate defined according to the relation $p(\mathbf{x}) := \mu(\mathbf{x}) \ge 0$. The set of predicates is denoted $\textit{AP}$. 
    We consider a fragment of STL, recursively defined as
    \begin{equation}
    \label{eq:stl_fragment}
        \begin{aligned}
            \psi &::= p \mid \diamondsuit_{I}p \mid \Box_{I} p \\
            \phi &::= \psi \mid \psi_1 \land \psi_2 \mid \psi_1 \lor \psi_2 
        \end{aligned}
    \end{equation}
    where $p \in \textit{AP}$. 
    The symbols $\wedge$ and $\lor$ denote the Boolean operators for conjunction and disjunction, respectively; and $\diamondsuit_{I}$ and $\Box_{I}$ denote the temporal operator {\em Eventually} and {\em Always}.
\end{definition}
In the STL fragment, we consider the bounded {\em Always} and {\em Eventually} operators, (requiring to have a predicate hold for all time $t \in I$ and for any time $t \in I$, respectively) and allow conjunctions and disjunctions of any combinations of these. 

\subsection{Time-Robust Semantics of STL}
The qualitative semantics of STL indicate whether a signal $\mathbf{x}$ satisfies a specification $\phi$ or not. 
However, we wish to additionally specify how robustly a signal satisfies the specification. To that end, we use a version of  Asynchronous Temporal Robustness (ATR) semantics inspired by \cite{lindemann2022temporal}. 
We call the variant \emph{recursive ATR}; it defines the allowed  time shifts on the states of the agents in a multi-agent system for a predicate $p$, and further recursively for the fragment of STL introduced in Def. \ref{def:stl}. Intuitively, the ATR indicates to what extent the state $\mathbf{x}_k$ of each agent $k$ can be shifted in time asynchronously with respect to the states of other agents, while still having the entire system's state $\mathbf{x}$ satisfying the specification $\phi$. This property becomes especially interesting in multi-agent systems, where agents are subject to individual time-shifts, but a joint specification. For example, for the system in Fig. \ref{fig:intro}, such a joint specification is "{\em Always, between 12 and 15 seconds, robot 1 and robot 2 should be $\epsilon$-close}". 

First, let $\mathcal{S}_p = \{k_1,\ldots,k_m\}$ be the set of indices of all agents that influence the truth value of predicate $p$. 
Let the relevant state vector for predicate $p$ be defined as $\mathbf{x}_p(t) = [\mathbf{x}_{k_1}(t),\ldots,\mathbf{x}_{k_m}(t)]^T, \forall k_i \in \mathcal{S}_p$. 
With respect to the example in Fig. \ref{fig:intro}, $\mathcal{S}_p = \{1,2\}$ and $\mathbf{x}_p(t) = [\mathbf{x}_1(t),\mathbf{x}_2(t)]^T$.

\begin{definition}[Recursive ATR]
\label{def:predATR}
    Let us define the time-shifted state that is relevant to predicate $p$ as $\mathbf{x}_{p,\bar{\kappa}}(t)$, with time shift $\bar{\kappa} := \big\{\kappa_k \big\rvert k \in \mathcal{S}_p\big\} \in \mathbb{R}^m, m = |\mathcal{S}_p|$ as 
    \begin{equation}
    \label{eq:kappa_bar}
        \mathbf{x}_{p,\bar{\kappa}}(t) := [\mathbf{x}_{k_1}(t+\kappa_{k_1}),\ldots,\mathbf{x}_{k_m}(t+\kappa_{k_m})]^T \quad \forall k \in \mathcal{S}_p.
    \end{equation}
    where $\mathbf{x}_{p,\bar{\kappa}}(t) \in \mathbb{R}^{\text{dim}\cdot m}$. Regardless of the dimensionality of the workspace, an agent is subjected to a single, unique, time shift; $\bar{\kappa} \in \mathbb{R}^m$ while $\mathbf{x}_{p,\bar{\kappa}}(t) \in \mathbb{R}^{\text{dim}\cdot m}$.

    The recursive ATR first defines the maximum extent the state of each agent $l \in \mathcal{S}_p$ 
    may be shifted in time asynchronously such that the predicate still holds.
    \begin{multline}
    \label{eq:ATR^p}
        \bar\theta_{p}(\mathbf{x},t) :=  \bar\chi_{p}(\mathbf{x}_p,t) \cdot \\
        \max\left\{\tau \ge 0 : \begin{array}{c} \kappa_l \in [-\tau,\tau], \forall l \in \mathcal{S}_p, \\ \bar\chi_{p}(\mathbf{x}_{p,\bar{\kappa}},t) = \bar\chi_{p}(\mathbf{x}_p,t)\end{array} \right\},
    \end{multline}
    where $\bar\chi^p$ is the characteristic function, defined as
    \begin{equation}
        \bar\chi_p(\mathbf{x},t) := \begin{cases}
            1, &\text{if} \enspace \mu(\mathbf{x},t) \geq 0 \\
            -1, &\text{else}
        \end{cases}
    \end{equation}


We additionally define ATR recursively for the temporal and Boolean operators of the STL fragment in Eq. \eqref{eq:stl_fragment}.
\begin{equation}
\label{eq:theta_ATR_temp_bin}
\begin{aligned}
    \bar\theta_{\Box_I p}(\mathbf{x}) &:= \min_{t' \in I}(\bar\theta_{p}(\mathbf{x},t')), \\
    \bar\theta_{\diamondsuit_I p}(\mathbf{x}) &:= \max_{t' \in I}(\bar\theta_{p}(\mathbf{x},t')), \\
    \bar\theta_{\phi_1 \land \phi_2}(\mathbf{x}) &:= \min(\bar\theta_{\phi_1}(\mathbf{x}),\bar\theta_{\phi_2}(\mathbf{x})), \\
    \bar\theta_{\phi_1 \lor \phi_2}(\mathbf{x}) &:= \max(\bar\theta_{\phi_1}(\mathbf{x}),\bar\theta_{\phi_2}(\mathbf{x})).
\end{aligned}
\end{equation}
obtaining the ATR of specification $\phi$ as $\bar\theta_{\phi}$.
\end{definition}

A non-negative ATR indicates the extent to which the state of all agents may be time-shifted asynchronously while still satisfying the predicate $p$. In contrast, negative ATR indicates the minimum extent to which the state of each agent needs to be shifted in time to change from violation of $p$ to its satisfaction.

\begin{remark}
\label{remark:remark_th}
    When a predicate is dependent on a single agent, e.g. $\Box_{[4,8]}(\mathbf{x}_2 \in A)$ as per Fig. \ref{fig:intro}, the Recursive ATR collapses to the combined left- and right temporal robustness of \cite{rodionova2022combined}, defining temporal robustness via the maximum permissible shift of the temporal interval $I$. 
\end{remark}

\begin{lemma}
    For the ATR $\bar\theta_{\phi}$, it follows that $\max(|\kappa_{k_1}|,\ldots,|\kappa_{k_m}|) \leq |\bar\theta_{\phi}| \implies \bar\chi_{\phi}(\mathbf{x}_{\bar\kappa}) = \bar\chi_{\phi}(\mathbf{x})$.
\end{lemma}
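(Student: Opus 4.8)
The plan is to prove the claim by structural induction on the grammar of $\phi$ in Eq.~\eqref{eq:stl_fragment}, carrying along the auxiliary fact that $\bar\theta_\phi(\mathbf{x})$ and $\bar\chi_\phi(\mathbf{x})$ always share the same sign. This sign-consistency is what lets us turn the magnitude bound $\max_l|\kappa_l|\le|\bar\theta_\phi(\mathbf{x})|$ into a statement about preserving the Boolean truth value $\bar\chi_\phi$, and it follows immediately by induction from the recursion in Eq.~\eqref{eq:theta_ATR_temp_bin}: a $\min$ or $\max$ of same-signed quantities keeps that sign, while a $\min$ or $\max$ straddling zero picks up exactly the sign dictated by the qualitative semantics of $\Box$, $\diamondsuit$, $\wedge$, $\lor$. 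Throughout, $\bar\kappa$ collects one shift $\kappa_k$ per agent occurring in $\phi$, applied consistently to that agent across every predicate in which it appears.

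For the base case $\phi=p$, the statement is essentially the definition of $\bar\theta_p$ in Def.~\ref{def:predATR}: by construction $|\bar\theta_p(\mathbf{x},t)|$ is the largest $\tau$ for which \emph{every} shift vector with $\kappa_l\in[-\tau,\tau]$, $l\in\mathcal{S}_p$, leaves $\bar\chi_p$ unchanged. Hence if $\max_l|\kappa_l|\le|\bar\theta_p(\mathbf{x},t)|$ then $\bar\kappa$ lies in that admissible box and $\bar\chi_p(\mathbf{x}_{p,\bar\kappa},t)=\bar\chi_p(\mathbf{x}_p,t)$; shifts of agents outside $\mathcal{S}_p$ are irrelevant because they do not enter $\mu$, so the same equality holds for the full state $\mathbf{x}_{\bar\kappa}$.

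For the inductive step I would split each operator according to whether $\mathbf{x}$ satisfies the subformula, using a witness argument. Take $\Box_I p$ with $\bar\theta_{\Box_I p}=\min_{t'\in I}\bar\theta_p(\mathbf{x},t')$. If $\mathbf{x}\models\Box_I p$, every $t'\in I$ has $\bar\theta_p(\mathbf{x},t')>0$ with $|\bar\theta_p(\mathbf{x},t')|\ge|\bar\theta_{\Box_I p}(\mathbf{x})|\ge\max_l|\kappa_l|$, so the induction hypothesis applied at each $t'$ keeps $p$ satisfied throughout $I$ and $\Box_I p$ is preserved. If $\mathbf{x}\not\models\Box_I p$, the $\min$ is attained at a violated point $t^\star$ with $|\bar\theta_p(\mathbf{x},t^\star)|=|\bar\theta_{\Box_I p}(\mathbf{x})|\ge\max_l|\kappa_l|$; the induction hypothesis at $t^\star$ keeps $p$ violated there, so $\Box_I p$ stays violated. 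The case $\diamondsuit_I p$ is dual (the witness time realizes the maximum in the satisfied case, and the uniform bound covers all $t'$ in the violated case), and $\wedge$/$\lor$ are the same argument with the $\min$/$\max$ taken over the finite set $\{\phi_1,\phi_2\}$, so no attainment issue arises there.

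The main obstacle is the attainment of the temporal extrema over the continuous interval $I$: the witness sub-cases ($\Box_I p$ violated, $\diamondsuit_I p$ satisfied) require an actual $t^\star\in I$ realizing the $\min$/$\max$, which I would justify from continuity of $t\mapsto\mu(\mathbf{x}(t))$ for the B\'ezier-parametrized signals together with compactness of $I$ (if one only asks for the strict bound $\max_l|\kappa_l|<|\bar\theta_\phi|$, a point approaching the infimum already suffices and attainment can be sidestepped, leaving only the boundary equality as the delicate case). The remaining care is purely in the sign bookkeeping, namely ensuring that in each of the four sub-cases the inequality $|\bar\theta_p(\mathbf{x},t')|\ge\max_l|\kappa_l|$ is invoked in the correct direction, after which the induction closes.
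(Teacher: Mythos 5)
Your proof is correct and follows essentially the same route as the paper's own (much briefer) proof sketch: unfold the definition of $\bar\theta_p$ at the predicate level, then propagate the bound through the $\min$/$\max$ recursion of Eq.~\eqref{eq:theta_ATR_temp_bin} using a uniform bound for $\Box_I$ and a witness for $\diamondsuit_I$. You are in fact more careful than the paper, which only treats the satisfied case and silently assumes attainment of the temporal extrema; your explicit handling of the violated sub-cases and of the attainment/boundary issue is a welcome refinement rather than a different approach.
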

\begin{proof}[Proof Sketch]
    Notice that Def. \ref{def:predATR} states that $\bar\theta_p(\mathbf{x},t) = c$ indicates that $\forall \kappa_{k_1},\hdots,\kappa_{k_m} \in [-c,c]$ we have that the shifted signal $\mathbf{x}_{p,\bar\kappa}(t)$ also satisfying predicate $p$. 
    $\bar\theta_{\Box_I p}(\mathbf{x}) = c$ subsequently states that $\forall t \in I, \bar\theta_p(\mathbf{x},t) \geq \bar\theta_{\Box_I p}(\mathbf{x}) = c$ ensuring ATR for all times $t \in I$ of at least $c$. Conversely, $\bar\theta_{\diamondsuit_I p}(\mathbf{x}) = c$ states that $\exists t\in I, \bar\theta_p(\mathbf{x},t) = \bar\theta_{\Box_I p}(\mathbf{x}) = c$ ensuring ATR for a time $t \in I$ of $c$.
    The Boolean operators are trivial. Given the fragment in Eq. \eqref{eq:stl_fragment} this concludes the proof sketch.
\end{proof}
\section{Problem Statement}
\label{sec:problem_statement}
Consider a double-integrator, continuous-time multi-agent dynamical system of the form 
\begin{equation}
\label{eq:double_integrator}
    \ddot{\mathbf{x}}(t) = \mathbf{u}(t), \quad \mathbf{x}(t_0) = \mathbf{x}_{t_0}, \dot{\mathbf{x}}(t_0) = \dot{\mathbf{x}}_{t_0},
\end{equation}
where $\mathbf{x} \in \mathbb{R}^{{\text{dim}}\cdot n}$ is a stacked vector of the states $\mathbf{x}_k \in \mathbb{R}^{\text{dim}}$ of the agents, with $\text{dim}$ being the dimension of the workspace and $n$ the number of agents. The system is subject to an initial- and final state $\mathbf{x}_{t_0}$ and $\mathbf{x}_{t_f}$ at initial- and final time $t_0$ and $t_f$, velocity constraints $\dot{\mathbf{x}} \in \mathcal{V} = [\bar{\dot{\mathbf{x}}},\underline{\dot{\mathbf{x}}}]$ for all agents, and a spatial-temporal mission $\phi$ expressed from the fragment of STL in Eq. \eqref{eq:stl_fragment}. 
Additionally, consider the workspace $\mathcal{W}$ as a convex polygon with convex polygon obstacles $\mathcal{W}_{obs}$, defining the free workspace as $\mathcal{W}_{free} = \mathcal{W} \setminus \mathcal{W}_{obs}$. 
We aim to find a collision-free motion plan that maximizes the ATR $\bar{\theta}^{*}_{\phi}$ from Def. \ref{def:predATR}.
As such, we wish to obtain a continuous-time control input such that the corresponding trajectory $\mathbf{x}^*(t)$ satisfies the specification $\phi$ with the ATR$\bar{\theta}^{*}_{\phi}$. 
Subsequently, we define the optimization problem
\begin{align}
\argmax_{\mathbf{u}} \quad & \bar{\theta}_{\phi}(\mathbf{x}) \label{eq:MILP_problem_cost}\\
\textrm{s.t.} \quad & \ddot{\mathbf{x}}(t) = \mathbf{u}(t), \forall t \in [t_0,t_f], \tag{8a}\label{eq:MILP_problem_c1}\\
                    & \bar{\theta}_{\phi}(\mathbf{x}) > 0, \tag{8b}\label{eq:MILP_problem_c2}\\
                    & \mathbf{x}(t_0) = \mathbf{x}_{t_0}, \mathbf{x}(t_f) = \mathbf{x}_{t_f}, \tag{8c}\label{eq:MILP_problem_c3}\\
                    & \mathbf{x}_k(t) \in \mathcal{W}_{free}, \forall t \in [t_0,t_f], \forall k \in \{1,\ldots,n\}, \tag{8d}\label{eq:MILP_problem_c4}\\
                    & \dot{\mathbf{x}}_k(t) \in \mathcal{V}, \forall t \in [t_0,t_f], \forall k \in \{1,\ldots,n\}. \tag{8e}\label{eq:MILP_problem_c5}
\end{align}
In contrast to existing problem definitions, we consider a continuous-time system 
and maximizing the continuous-time asynchronous temporal robustness which has not been, to the best of our knowledge, performed before. 

\section{Multi-Agent B\'ezier parametrization}
\label{sec:bezier_parametrization}
As we are concerned with specifications that are expressed both in the spatial and temporal domains, we wish to use a trajectory formulation that allows us to reason over these dimensions separately. 
To that end, we define a \emph{curvature B\'ezier curve} $r(s)$ and a \emph{temporal B\'ezier curve} $t := h(s)$.
We then couple these curves to parameterize the physical trajectory over time.
We base our multi-agent B\'ezier formulation on the work in \cite{marcucci2022motion} where the authors express a trajectory segment $x$ via the use of $r(s)$ and $h(s)$ as follows:
\begin{gather}
\label{Bezier_formulation}
    r(s) := x(h(s)), \\
    \dot{r}(s) := \dot{x}(h(s))\dot{h}(s). \label{eq:bezier_formulation1}
\end{gather}
It should be noted that $x$ is our physical trajectory of interest, which is parameterized by the two types of B\'ezier curves.

\subsection{Trajectory Construction}
Consider a space of dimension $\mathrm{dim}\cdot n$ (e.g., $n$ agents modeled as the double integrator from Eq. \eqref{eq:double_integrator} moving in a 2D environment). To generate the full trajectory of a single agent $k$, we concatenate $N$ B\'ezier curve (\emph{segments}), denoted as $\mathbf{r}(k) \in \mathcal{B}^{\mathrm{dim}\times N}$ and $\mathbf{h}(k) \in \mathcal{B}^{N}$, where $\mathcal{B}$ indicates the set of B\'ezier curves. An example of the concatenation of curvature B\'ezier curves is shown in Fig.~\ref{fig:beziers}b. 

For a multi-agent system consisting of $n$ agents, we consider $n$ concatenations of the aforementioned, leading to the system parametrization by $\mathbf{r}$ and $\mathbf{h}$.
Given agents $k \in \{1,\ldots,n\}$ and segments $i \in \{1,\ldots,N\}$, the function $\mathbf{r}$ is defined as $\mathbf{r} : \{1,\ldots,n\} \times \{1,\ldots,N\} \rightarrow \mathcal{B}^{\mathrm{dim}}$ returning a vector of curvature B\'ezier curves. The function $\mathbf{h}$ is defined as $\mathbf{h} : \{1,\ldots,n\} \times \{1,\ldots,N\} \rightarrow \mathcal{B}$ returning a temporal B\'ezier curve. The $b$'th control point of the temporal B\'ezier curve of agent $k$ at segment $i$ is denoted by $\mathbf{h}(k,i)^{(b)}$ and similarly for the curvature, $\textbf{r}(k,i)^{(b)}$, as shown in Fig. \ref{fig:beziers}.

\subsection{Continuity Constraints}
To realize the double integrator dynamics of Eq. \eqref{eq:double_integrator}, we ensure continuous differentiability of $x$ between. We do this in a conservative yet linear manner for each agent $k \in \{1,\ldots,n\}$ by constraining the first and last (i.e. $d$-th for a B\'ezier curve of degree $d$) control points of each segment $i$ according to
\begin{align}
\label{eq:C1_constraints}
        & \textbf{r}(k,i)^{(d)} = \textbf{r}(k,i+1)^{(0)} \land 
        \mathbf{h}(k,i)^{(d)} = \mathbf{h}(k,i+1)^{(0)} \nonumber \\ & \Rightarrow \textbf{x}_k(i) = \textbf{x}_k(i+1) \\
         & \dot{\textbf{r}}(k,i)^{(d-1)} = \dot{\textbf{r}}(k,i+1)^{(0)} \land 
        \dot{\mathbf{h}}(k,i)^{(d-1)} = \dot{\mathbf{h}}(k,i+1)^{(0)} \nonumber \\ & \Rightarrow \dot{\textbf{x}}_k(i) = \dot{\textbf{x}}_k(i+1).
\end{align}
We constrain the initial and final time, $t_0$ and $t_f$, and the forward traversal of time via
\begin{align}
    \label{eq:h_constraints}
   & \mathbf{h}(k,0)^{(0)} = t_0, \\
   & \mathbf{h}(k,N)^{(d)} = t_f, \\
   & \dot{\mathbf{h}}(k,i)^{(b)} > 0 \ \forall i \in \{1,\ldots,N\}, \forall b \in \{0,\ldots,d-1\},
\end{align}
for each agent $k \in \{1,\ldots,n\}$. 
The B\'ezier formulation allows us to consider the maximization problem $\argmax_{\mathbf{r},\mathbf{h}}$ as opposed to $\argmax_{\mathbf{u}}$ in Eq. \eqref{eq:MILP_problem_cost} as the acceleration, $\ddot{\mathbf{x}}$, is implicitly parameterized by the curvature and time B\'eziers $\mathbf{r}$ and $\mathbf{h}$ due to the derivative properties of B\'ezier curves as per Sec. \ref{ssec:bezier_curves}. 

\section{Recursive ATR on B\'ezier Segments}
\label{sec:rATR}


The definition of recursive ATR in Def. \ref{def:predATR} cannot directly be implemented in a motion planning framework, as it is defined over strictly continuous time signals. Instead, we would like to be able to evaluate temporal robustness at a finite set of points. As such, we consider a definition of recursive ATR that considers non-constant time segments from the B\'ezier parametrization in Sec. \ref{sec:bezier_parametrization}.
In the following, in accordance with the constraint in Eq. \eqref{eq:MILP_problem_c2}, we restrict ourselves to strictly positive definitions of temporal robustness. We will show that under this assumption, the recursive ATR on non-singular time segments is sound and an underapproximation of $\bar\theta_{\phi}(\mathbf{x})$. 

Let $I_{(k,i)} = [\mathbf{h}(k,i)^{(0)},\mathbf{h}(k,i)^{(d)}]$ be the time-span of segment $i$ of agent $k$, see Fig. \ref{fig:toy_example} and assume it is not a single point. Segment Recursive ATR considers allowed segment shifts on the states of individual agents. 
\begin{definition}[Segment Recursive ATR]
\label{def:predATR_segment}
    Consider the time-shifted state $\mathbf{x}_{p,\bar{\kappa}}(t)$ from Eq. \eqref{eq:kappa_bar}.
    First, for a predicate $p$, Segment Recursive ATR for an agent $k \in \mathcal{S}_p$ and a segment $i \in \{1,\ldots,N\}$ defines the maximum extent the state of each agent $l \in \mathcal{S}_p$, including $k$, may shift asynchronously such that all segments spanning the original time-span of segment $i$, $I_{(k,i)}$, satisfy $\chi_p(k,i)$. 
    \begin{multline}
    \label{eq:ATR^p_segment}
        \theta_{p}(\mathbf{x},k,i) :=  \chi_{p}(\mathbf{x}_p,k,i) \cdot \\
        \max\left\{\tau \ge 0 : \begin{array}{c} \bar{\kappa} \in [-\tau,\tau], \forall l \in \mathcal{S}_p, \\ \chi_{p}(\mathbf{x}_{p,\bar{\kappa}},k,j) = \chi_{p}(\mathbf{x}_p,k,i), \forall j \in J(\tau) \end{array} \right\},
    \end{multline}
    where $\chi_{p}(\mathbf{x}_p,k,i)$ is the characteristic function for a segment $i$ of agent $k$,
    \begin{equation}
    \label{eq:chi^p}
        \chi_{p}(\mathbf{x}_p,k,i) := \begin{cases}
                1,  &\text{if} \enspace \mu(\mathbf{x}_p,t) \geq 0 \quad \forall t \in I_{(k,i)} \\
            -1, &\text{otherwise,}
        \end{cases}
    \end{equation}
    which relies on evaluating the predicate for all control points of segment $i$.
    The set $J(\tau)$ is dependent on $\tau$ and defined as
    \begin{multline}
    \label{eq:J}
        J(\tau) := \big\{j \in \{1,\ldots,N\} \big\rvert \\
        \exists \kappa_k \in [-\tau,\tau] : I_{(k,j)} + \kappa_k \cap I_{(k,i)} \neq \varnothing \big\}.
    \end{multline}
    We additionally define Segment Recursive ATR for the temporal operators of the STL fragment in Eq. \eqref{eq:stl_fragment} as
    \begin{equation}
    \label{eq:segment_always_k}
        \theta_{\Box_I p}(\mathbf{x},k) := \min_{i\in \{1,...,N\},I_{(k,i)}\cap I \neq \varnothing}(\theta_{p}(\mathbf{x},k,i)),
    \end{equation}
    \begin{equation}
    \label{eq:segment_always}
        \theta_{\Box_I p}(\mathbf{x}) := \min_{k\in \mathcal{S}_p}(\theta_{\Box_I p}(\mathbf{x},k)),
    \end{equation}
    \begin{equation}
    \label{eq:segment_eventually_k}
        \theta_{\diamondsuit_I p}(\mathbf{x},k) := \max_{i \in \{1,...,N\},I_{(k,i)}\cap I \neq \varnothing}(\theta_{p}(\mathbf{x},k,i)),
    \end{equation}
    \begin{equation}
    \label{eq:segment_eventually}
        \theta_{\diamondsuit_I p}(\mathbf{x}) := \min_{k\in \mathcal{S}_p}(\theta_{\diamondsuit_I p}(\mathbf{x},k)),
    \end{equation}
    and for the Boolean operators as
    \begin{equation}
    \label{eq:segment_binary}
    \begin{aligned}
        \theta_{\phi_1 \land \phi_2}(\mathbf{x}) &:= \min(\theta_{\phi_1}(\mathbf{x}),\theta_{\phi_2}(\mathbf{x})), \\
        \theta_{\phi_1 \lor \phi_2}(\mathbf{x}) &:= \max(\theta_{\phi_1}(\mathbf{x}),\theta_{\phi_2}(\mathbf{x})),
    \end{aligned}
    \end{equation}
    obtaining the ATR of specification $\phi$ as $\theta_{\phi}(\mathbf{x})$.
    \end{definition}
The set $J$ indicates all segments of $k$ that now intersect or have intersected the original time span of segment $i$ under the time-shift $\kappa_k \in [-\tau,\tau]$, which includes $i$ itself.

\begin{figure}[t!]
    \centering
    \includegraphics[width=0.4\textwidth]{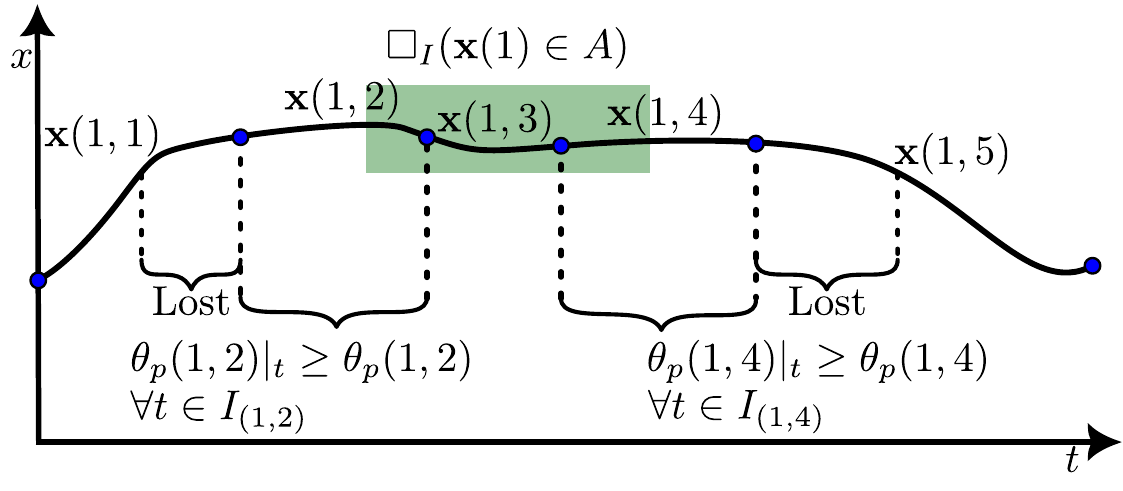}
    \caption{Example of Always staying within a certain range $x \in A$ for a certain time interval $I$. 
    Notice the under-approximation of the true temporal robustness of Def. \ref{def:predATR} as the $\theta_p(k,i)$ is assigned the lower-bound of $\theta_p(k,i)\rvert_t, \forall t \in I_{(k,i)}$ and the end-point evaluation of the B\'ezier curve (Lost).}
    \label{fig:toy_example}
    \vspace{-0.5cm}
\end{figure}

\begin{theorem} For positive ATR, $\theta_{\phi}(\mathbf{x}) \leq \bar\theta_{\phi}(\mathbf{x})$, indicating an underapproximation of the ATR from Def. \ref{def:predATR}.
\end{theorem}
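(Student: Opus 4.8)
The plan is to prove $\theta_{\phi}(\mathbf{x}) \leq \bar\theta_{\phi}(\mathbf{x})$ by structural induction on the STL fragment of Eq.~\eqref{eq:stl_fragment}, establishing the inequality first at the predicate level and then propagating it through the temporal and Boolean operators. The base case is the heart of the argument: I would show that for any predicate $p$, any agent $k$, and any segment $i$, the segment-level quantity satisfies $\theta_{p}(\mathbf{x},k,i) \leq \bar\theta_{p}(\mathbf{x},t)$ for every $t \in I_{(k,i)}$. The key observation is that the two definitions differ in exactly two ways, each of which can only \emph{shrink} the feasible set of the maximization in Eq.~\eqref{eq:ATR^p_segment} relative to Eq.~\eqref{eq:ATR^p}, and therefore can only decrease the reported robustness. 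Since we restrict to positive ATR, the characteristic functions equal $+1$ and the sign prefactors are harmless, so the inequality reduces to comparing the two $\max\{\tau \geq 0 : \ldots\}$ feasible sets.

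First I would pin down the first source of conservatism. The segment characteristic function $\chi_p(\mathbf{x}_p,k,i)$ in Eq.~\eqref{eq:chi^p} demands $\mu(\mathbf{x}_p,t) \geq 0$ for \emph{all} $t \in I_{(k,i)}$ (evaluated via the control points, using the convex-hull property of Sec.~\ref{ssec:bezier_curves}), whereas the pointwise $\bar\chi_p(\mathbf{x}_p,t)$ checks a single instant. A time shift $\bar\kappa$ that is feasible for the segment definition must keep the predicate satisfied over an entire shifted interval and over every segment $j \in J(\tau)$ that overlaps $I_{(k,i)}$; this is a strictly stronger requirement than keeping it satisfied at one point $t$. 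Hence any $\tau$ admissible in Eq.~\eqref{eq:ATR^p_segment} is also admissible in the pointwise condition of Eq.~\eqref{eq:ATR^p} at each $t \in I_{(k,i)}$, giving $\theta_p(\mathbf{x},k,i) \leq \bar\theta_p(\mathbf{x},t)$. The set $J(\tau)$ of Eq.~\eqref{eq:J}, which over-counts all segments reachable under a shift, is precisely what enforces this interval-wide guarantee and thus the under-approximation.

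Next I would lift this to the temporal operators. For $\Box_I p$, the continuous definition in Eq.~\eqref{eq:theta_ATR_temp_bin} takes $\min_{t' \in I} \bar\theta_p(\mathbf{x},t')$, while the segment version in Eqs.~\eqref{eq:segment_always_k}--\eqref{eq:segment_always} takes a minimum over all segments $i$ with $I_{(k,i)} \cap I \neq \varnothing$ and then over agents $k \in \mathcal{S}_p$. Because the segments with $I_{(k,i)} \cap I \neq \varnothing$ collectively cover $I$, and because the base-case bound $\theta_p(\mathbf{x},k,i) \leq \bar\theta_p(\mathbf{x},t')$ holds for the relevant $t'$, the segment minimum is taken over a set of values each bounded above by a corresponding term in the continuous minimum; the extra minimization over $k$ only decreases the segment value further, preserving the inequality. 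The $\diamondsuit_I p$ case follows dually: the segment maximum over $i$ is bounded above by $\max_{t'\in I}\bar\theta_p$, and the additional $\min_{k}$ in Eq.~\eqref{eq:segment_eventually} again only lowers the segment side. The Boolean cases in Eq.~\eqref{eq:segment_binary} are immediate, since $\min$ and $\max$ are monotone in each argument and the inductive hypotheses $\theta_{\phi_1} \leq \bar\theta_{\phi_1}$, $\theta_{\phi_2} \leq \bar\theta_{\phi_2}$ transfer directly.

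The main obstacle I anticipate is making the base-case set-inclusion argument fully rigorous, specifically handling the interaction between the shift variable $\bar\kappa$ and the overlap set $J(\tau)$, which itself depends on $\tau$. One must verify that requiring $\chi_p(\mathbf{x}_{p,\bar\kappa},k,j) = +1$ for \emph{every} $j \in J(\tau)$ genuinely subsumes the pointwise condition at each $t \in I_{(k,i)}$ under the same shift $\bar\kappa$ — i.e.\ that any point reachable by shifting within $[-\tau,\tau]$ lies in some segment counted by $J(\tau)$. I would discharge this by a direct covering argument: for a fixed $t \in I_{(k,i)}$ and shift $\kappa_k \in [-\tau,\tau]$, the shifted evaluation point $t + \kappa_k$ lies in $I_{(k,j)}$ for some $j$ whose shifted span intersects $I_{(k,i)}$, hence $j \in J(\tau)$, so the segment constraint forces $\mu \geq 0$ there. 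A secondary subtlety worth a sentence is the reliance on positivity: for negative ATR the sign prefactors reverse the direction of the inequality, which is exactly why the theorem is stated for positive ATR only, matching constraint~\eqref{eq:MILP_problem_c2}.
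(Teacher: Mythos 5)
Your proposal is correct and follows essentially the same route as the paper's proof: it establishes $\theta_p(\mathbf{x},k,i) \leq \bar\theta_p(\mathbf{x},t)$ for all $t \in I_{(k,i)}$ via feasible-set inclusion of the two $\tau$-maximizations (relying on positivity so the characteristic-function prefactors are benign), and then propagates the inequality through the $\min$/$\max$ definitions of the temporal and Boolean operators. Your explicit covering argument for $J(\tau)$ simply fills in a step the paper asserts without detail.
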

\begin{proof}
First, we show that $\theta_p(\mathbf{x},k,i) \leq \bar\theta_p(\mathbf{x},t)$, for all $t \in I_{(k,i)}.$ 
    From Eq. \eqref{eq:chi^p} it follows that $\chi_p(\mathbf{x},k,i) \leq  \bar{\chi}_p(\mathbf{x},t)$, for all $t \in I_{(k,i)}$.
    Note that we assume that $\chi_p(\mathbf{x},k,i) = 1$. 
    From Eq. \eqref{eq:chi^p} and Eq. \eqref{eq:J}, it  follows that for all $\bar{\kappa}$, if $\chi_p(\mathbf{x}_{p,\bar{\kappa}},k,j) = \chi_p(\mathbf{x},k,i) =1, \forall j \in J(\tau)$, then also $\bar{\chi}_p(\mathbf{x}_{\bar{p,\kappa}},t) = \bar{\chi}_p(\mathbf{x},t) =1 , \forall t \in I_{(k,i)}$.
    Hence, $\chi_p(\mathbf{x}_{p,\bar\kappa},k,j) \leq  \bar{\chi}_p(\mathbf{x}_{p,\bar\kappa},t)$, $\forall j \in J, t \in I_{(k,i)}$.  Hence, $\tau^\star \leq \bar\tau^\star$, where $\tau^\star$ is the maximizing $\tau$ in Eq. \ref{eq:ATR^p_segment} and similarly $\bar \tau^\star$ is the maximizing $\bar \tau$ in Eq. \ref{eq:ATR^p}. Considering the positive value of the characteristic function, altogether we have $\theta_p(\mathbf{x},k,i) \leq \bar\theta_p(\mathbf{x},t)$.

    Next, consider $\theta_{\Box_I p}(\mathbf{x},k)$.
    As we have established that $\theta_p(\mathbf{x},k,i) \leq \bar\theta_p(\mathbf{x},t), \forall t \in I_{(k,i)}$, we can state that $\theta_p(\mathbf{x},k,i) \leq \bar\theta_p(\mathbf{x},t), \forall t \in (I \cap I_{(k,i)})$. 
    Hence, consider that by taking $\min$ over all segments $i$ for which $I_{(k,i)} \cap I \neq \varnothing$ we obtain a temporal robustness $\theta_{\Box_I p}(\mathbf{x},k) \leq \bar\theta_{\Box_I p}(\mathbf{x},t)$.
    To obtain the overall ATR, $\theta_{\Box_I}$, we need to consider each agent $k \in \mathcal{S}_p$ as described in Eq. \eqref{eq:segment_always}.
    From Def. \ref{def:predATR_segment} it follows that $\bar\theta_{\Box_I p}(\mathbf{x},k) = \bar\theta_{\Box_I p}(\mathbf{x},l), \forall k,l \in \mathcal{S}_p$ and hence by taking the $\min$ over all agents we obtain $\theta_{\Box_I p}(\mathbf{x}) \leq \bar\theta_{\Box_I p}(\mathbf{x})$.

    Next, consider $\theta_{\diamondsuit_I p}(\mathbf{x},k)$.
    Consider that by taking $\max$ over all segments $i$ for which $I_{(k,i)} \cap I \neq \varnothing$ we obtain a temporal robustness $\theta_{\diamondsuit_I p}(\mathbf{x},k) \leq \bar\theta_{\diamondsuit_I p}(\mathbf{x})$.
    Similar to $\Box_I p$, we consider each agent in $\mathcal{S}_p$ and obtain $\theta_{\diamondsuit_I p}(\mathbf{x}) \leq \bar\theta_{\diamondsuit_I p}(\mathbf{x})$.
    
    The binary $\land$ and $\lor$ are trivial. Given the considered fragment of Eq. \eqref{eq:stl_fragment}, this concludes the proof.
\end{proof}
\begin{remark}
    As $N\rightarrow \infty$, Segment Recursive ATR collapses to Recursive ATR in Def. \ref{def:predATR} as $\theta_{\Box_I p}(\mathbf{x},k) = \theta_{\Box_I p}(\mathbf{x},l),\forall k,l\in \mathcal{S}_p$ and $\theta_p(\mathbf{x},k,i) \rightarrow \bar\theta_p(\mathbf{x},t)$ as $I_{(k,i)} \rightarrow t$.
\end{remark}

We redefine the problem in Sec. \ref{sec:problem_statement} by maximizing the segment recursive ATR, $\theta_{\phi}(\mathbf{x})$, as opposed to $\bar{\theta}_{\phi}(\mathbf{x})$ in Eq. \eqref{eq:MILP_problem_cost} and constraining satisfaction by $\theta_{\phi}(\mathbf{x}) > 0$ as opposed to $\bar{\theta}_{\phi}(\mathbf{x}) > 0$ in Eq. \eqref{eq:MILP_problem_c2}. 
\section{MILP encoding}
\label{sec:MILP_encoding}
In this section, we will address the computation of B\'ezier parametrization that can handle general motion planning constraints, STL specifications, and their temporal robustness. We will specifically elaborate on how Segment Recursive ATR from Def. \ref{def:predATR_segment} is implemented in a MILP.

\subsection{Trajectory Constraints}
Let us first define the trajectory constraints from Eq. \eqref{eq:MILP_problem_c1}, \eqref{eq:MILP_problem_c3}, \eqref{eq:MILP_problem_c4}, and \eqref{eq:MILP_problem_c5} with the B\'ezier parametrization from Sec. \ref{sec:bezier_parametrization}. First, we constrain the initial and final state of the trajectory (corresponding to Eq. \eqref{eq:MILP_problem_c3}) by constraining the first and last control point of the position and velocity of the first and last segment respectively, according to
\begin{align}
    \mathbf{x}_k(t_0) = \mathbf{x}_{k,t_0} & \iff \mathbf{r}(k,0)^{(0)} = \mathbf{x}_{k,t_0} \\
    \dot{\mathbf{x}}_k(t_0) = \dot{\mathbf{x}}_{k,t_0} & \iff \mathbf{\dot{r}}(k,0)^{(0)} = \dot{\mathbf{h}}(k,0)^{(0)} \dot{\mathbf{x}}_{k,t_0}, \\
    \mathbf{x}_k(t_f) = \mathbf{x}_{k,t_f} & \iff \mathbf{r}(k,N)^{(d)} = \mathbf{\mathbf{x}}_{k,t_f}, \\
    \dot{\mathbf{x}}_k(t_f) = \dot{\mathbf{x}}_{k,t_f} & \iff \dot{\mathbf{r}}(k,N)^{(d-1)} = \dot{\mathbf{h}}(k,N)^{(d-1)}\dot{\mathbf{x}}_{k,t_f}.
\end{align}
We then address constraint Eq. \eqref{eq:MILP_problem_c4} by introducing a constraint that each control point of segment $i$ of agent $k$ lies within the convex polygon workspace and that all control points of segment $i$ of agent $k$ lay outside at least one face of every convex obstacle polygon
\begin{multline}
    \label{eq:constraint:w_free}
        \mathbf{x}_k(i) \in \mathcal{W}_{\mathit{free}} \impliedby \bigwedge_{f=1}^{n_w^{\mathit{faces}}} \bigwedge_{j=0}^{d} (b_{w}^{f} - H_{w}^{f} \mathbf{r}(k,i)^{(j)} \geq 0) \land \\ 
        \bigwedge_{l=1}^{n^{\mathit{obs}}} \bigvee_{f=1}^{n_l^{\mathit{faces}}} \bigwedge_{j=0}^d (b^f_l - H^f_l \mathbf{r}(k,i)^{(j)} \leq 0 ),
\end{multline}
where $n_{\mathit{obs}}$ defines the number of obstacles, $n_w^{\mathit{faces}}$ defines the number of faces of the world polygon, $\mathcal{W}_{free}$, and $n_l^{\mathit{faces}}$ defines the number of faces of obstacle polygon $l$, $H_w^f$ and $b_w^f$ are the $f$'th row of the matrix and vector for the half-space inequality conditions of the convex {\em world} polygon. $H_l^kf$ and $b_l^f$ denote the equivalent for obstacle $l$.    
An example of this is shown in Fig. \ref{fig:beziers}b.

Lastly, we constrain the velocity (constraint in Eq. \eqref{eq:MILP_problem_c5})
\begin{multline}
    \label{eq:constraint_dotq}
        \dot{\mathbf{x}}_k(i) \in \mathcal{V} \impliedby \dot{\mathbf{h}}(k,i)^{(j)}\underline{\dot{\mathbf{x}}}_k \leq \dot{\mathbf{r}}(k,i)^{(j)} \leq \dot{\mathbf{h}}(k,i)^{(j)}\bar{\dot{\mathbf{x}}}_k, \\
        \forall j \in \{0,\ldots,d-1\}.
\end{multline}
Notice the workspace and velocity constraints, \eqref{eq:constraint:w_free} and \eqref{eq:constraint_dotq}, rely on the convex-hull overapproximation and bounding box and can be conservative.

\subsection{STL Constraints}
We use $\theta^{\pm\ldots\pm}_p(\mathbf{x},k,i)$, where $\pm \in \{+,-\}$, to denote the temporal robustness of predicate $p$ for agent $k$ at segment $i$ where multiple agents influence the truth value of the predicate. For example, if $S_p =\{k,l\}$, $\theta^{+-}_p(\mathbf{x},k,i)$ indicates that signal $k$ is shifted to the right and signal $l, l\neq k$ is shifted to the left. It hence captures the shifts $\forall \kappa_k \in [0,\tau], \kappa_l \in [-\tau,0]$. Similarly $\theta^{++}_p(\mathbf{x},k,i)$ captures the shifts $\forall \kappa_k, \kappa_l \in [0,\tau]$, etc. $\theta^{\pm\pm}_p(\mathbf{x},k,i)$ is then the overall robustness of $p$, regardless of the direction of the time shift, equivalent to Eq. \eqref{eq:ATR^p_segment}. We also use $\theta^{0\pm\ldots\pm}_p(\mathbf{x},k,i)$, where $\pm \in \{+,-\}$ to indicate the temporal robustness when agent $k$ is not subject to a time shift, while the other agents $l \in S_p, l \neq k$ are.

\subsubsection{Predicates}
Let us first define the satisfaction of a predicate for a B\'ezier segment $i$ of agent $k$. 
A B\'ezier curve can only be evaluated in a linear fashion at the first and last control point. In order to keep computations feasible, we define satisfaction of a predicate $p$ for the segment $i$ via the bounding box of segment $i$. For stay-in polygon predicate $p$, this would result in the indicator variable $z^{\mathit{spat}}_{(k,i)} \in \mathbb{B}$ for satisfaction of predicate $p$ for the segment $i$ of agent $k$ according to
\begin{equation}
\label{eq:z^spat}
    z^{\mathit{spat}}_{(k,i)} = \bigwedge_{f=1}^{n_{\mathit{faces}}} \bigwedge_{j=0}^d (b^{f}-H^{f} \mathbf{r}(k,i)^{(j)} \geq 0).
\end{equation}
We can additionally consider interdependent predicates, for example, $\mu = (|\mathbf{x}_1 - \mathbf{x}_2|_{\infty} \leq \epsilon)$ from Fig. \ref{fig:intro}.
Given that each agent has its own temporal curve per segment, $\mathbf{h}(k,i): \{1,\ldots,n\} \times \{1,\ldots,N\} \rightarrow \mathcal{B}$, we can only look at predicate satisfaction by first considering the intersections of B\'ezier segments between agents via
\begin{equation}
    z^{\mathit{temp}}_{(k,i),(l,j)} := \{I_{(k,i)}\cap I_{(l,j)} \neq \varnothing\},
\end{equation}
where $z^{\mathit{temp}}_{(k,i),(l,j)} \in \mathbb{B}$ is a variable indicating whether segment $i$ of agent $k$ and segment $j$ of agent $l$ intersect.
We can obtain the equivalent of Eq. \eqref{eq:z^spat} for predicates that depend on the states of multiple agents 
\begin{equation}
    z^{\mathit{spat}}_{(k,i)} := \bigwedge z^{\mathit{spat}}_{(k,i),(l,j)} \quad \forall j : z^{\mathit{temp}}_{(k,i),(l,j)} = \top, \forall l \in \mathcal{S}_p, l \neq k,
\end{equation}
that states that $z^{\mathit{spat}}_{(k,i)} \in \mathbb{B}$ is true if and only if all segments of agent $l$ that intersect segment $i$ of agent $k$ satisfy the predicate function. 
For a predicate $p$ that is dependent on 3 agents, this would result in a tensor of indicator variables $z^{\mathit{spat}} \in \mathbb{B}^{N \times N \times N}$. 
\\\\
Implementing the segment recursive ATR from Def. \ref{def:predATR_segment} for agent $k$ and segment $i$ relies on several steps: 

\begin{figure}[t!]
    \centering
    \includegraphics[width=0.4\textwidth]{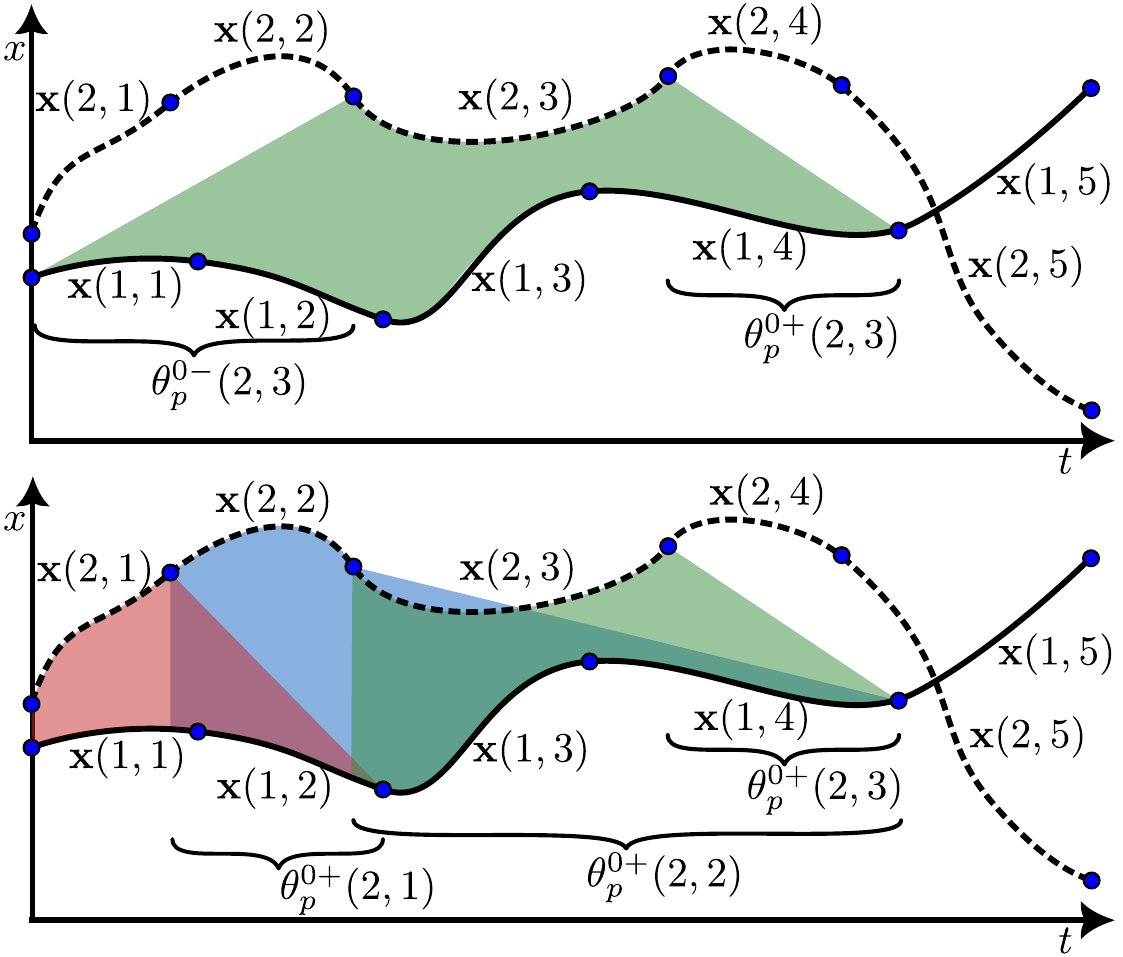}
    \caption{Top: Right and left temporal robustness of the third index of agent $2$ with respect to shifts in the state of agent $1$ for predicate $p = \mathbf{x}_{1} \leq \mathbf{x}_{2}$. Bottom: Right temporal robustness of indices of agent $2$ w.r.t. agent $1$, where the state of agent $2$ is static. we need to consider these when we look at the ATR of segment $3$ of agent $2$.}
    \label{fig:theta_pm}
    \vspace{-0.5cm}
\end{figure}

\begin{enumerate}
    \item First, for each agent $l \in \mathcal{S}_p, l \neq k$, we create two indicator arrays. $z^{+}_{(l,\cdot)(k,i)} \in \mathbb{B}^N$ indicates the segments of $l$ that appear after intersecting $I_{(k,i)}$, $z^{-}_{(l,\cdot)(k,i)} \in \mathbb{B}^N$ indicates the segments of agent $l$ that appear before intersecting
    \begin{gather}
        z^{+}_{(l,\cdot)(k,i)} = \{j \mid \mathbf{h}(l,j)^{(d)} \geq \mathbf{h}(k,i)^{(d)}\}, \\
        z^{-}_{(l,\cdot)(k,i)} = \{j \mid \mathbf{h}(l,j)^{(0)} \leq \mathbf{h}(k,i)^{(0)}\}.
    \end{gather}
    
    \smallskip
    For the example illustrated in Fig. \ref{fig:theta_pm}, $$z^{+}_{(1,\cdot),(2,2)} = \begin{bmatrix} 0 & 0 & 0 & 1 & 1 \end{bmatrix}$$  $$z^{-}_{(1,\cdot),(2,2)}=\begin{bmatrix} 1 & 1 & 0 & 0 & 0 \end{bmatrix}$$ with respect to segment $I_{(k,i)}=I_{(2,2)}$.

    \medskip
    
    \item As we need to consider the temporal robustness of the segments $j \in J(\tau)$ per Eq. \eqref{def:predATR_segment}, we first obtain the temporal robustness for segment $i$ of agent $k$ while the state of agent $k$ is static for all combinatorial time shifts of agent $l \in \mathcal{S}_p, l\neq k$, e.g., $\theta_p^{0++..}(k,i)$, $\theta_p^{0+-..}(k,i)$, $\theta_p^{0-+..}(k,i)$, $\theta_p^{0--..}(k,i)$, etc.
    We obtain $\theta_p^{0\pm\pm..}(k,i)$ by considering the minimum duration $\forall l \in \mathcal{S}_p$ when $\chi^p_{(l,j)} \neq \chi^p_{(k,i)}$ occurs for the first time and $z^{+}_{(l,),(k,i)}=1$ or $z^{-}_{(l,),(k,i)}=1$ depending on the direction of the shift under consideration.
    For obtaining $\theta_p^{0+}(k,i)$, we consider an $N\times 1$ column of $z^{\mathit{spat}} \in \mathbb{B}^{N \times N}$.

    \smallskip
    
    For the example in Fig. \ref{fig:theta_pm}, agent $2$ and segment $2$, we obtain $$z^{\mathit{spat}}_{(2,2)} = \begin{bmatrix} 1 & 1 & 1 & 1 & 0 \end{bmatrix}$$ for which $\theta_p^{0+}(2,2) = \mathbf{h}(1,4)^{(0)} - \mathbf{h}(2,2)^{(d)}$, i.e. the first occurrence of $\chi_p(l,j) \neq \chi_p(k,i)$. 
    Equivalently, $\theta_p^{0-}(2,2) = \mathbf{h}(2,2)^{(0)} - \mathbf{h}(1,0)^{(0)}$.
    \medskip
    
    \item Then, realize that according to Def. \ref{def:predATR_segment}, the state of agent $k$ shifts and as such, element $i$ gets replaced by $j \in J$ in accordance to Eq. \eqref{eq:J}. As such, for all combinatorial shifts in $\theta_p^{0\pm\pm..}(k,i)$, for $\pm \in \{+,-\}$, we consider the forward and backward (or right and left) shift of the state of agent $k$. 
    This results in $\theta_p^{+\pm\pm..}(k,i)$ and $\theta_p^{-\pm\pm..}(k,i)$.
    We construct $\theta_p^{+\pm\pm..}(k,i)$ by adding the segment durations $I_{(k,j)}, j \in J(\tau), j\leq i$ to $\theta_p^{+\pm\pm..}(k,i)$ only if there is sufficient temporal robustness of $\theta_p^{0\pm\pm..}(k,j)$ for the predicate to hold when $I_{(k,j)} \cap I_{(k,i)} \neq \varnothing$. 

    \smallskip
    An example of this is indicated in Fig. \ref{fig:theta_pm} in the bottom for a predicate dependent on the state of two agents.

    \medskip

    \item At last, we obtain the Segment Recursive ATR of segment $i$ of agent $k$ as
        \begin{equation}
            \theta_p(\mathbf{x},k,i) := \min(\theta_p^{+\pm\pm..}(\mathbf{x},k,i),\theta_p^{-\pm\pm..}(\mathbf{x},k,i)).
        \end{equation}
\end{enumerate}
The above procedure is followed for each agent $l \in \mathcal{S}_p$ for each segment $i \in \{1,\ldots,N\}$. We cannot translate the recursive ATR $\theta_p(k,i)$ to any of the segments of agent $l, l\neq k$ due to the non-constant duration of the B\'ezier segments.

\medskip

\subsubsection{Temporal and Binary Operators}
The temporal and binary operators are directly applied via Eq. \eqref{eq:segment_always} - \eqref{eq:segment_binary}. It should be noted that these definitions mean that while our temporal robustness is continuous, it is evaluated at a discrete number of points in space. We obtain $\theta_{\phi}(\mathbf{x})$ and solve Eq. \eqref{eq:MILP_problem_cost} with the underapproximation of $\theta_{\phi}(\mathbf{x}) \leq \bar\theta_{\phi}(\mathbf{x})$ for the cost and the constraint \eqref{eq:MILP_problem_c2}.

\subsection{Computational Complexity}
For a multi-agent predicate, $p$, we approximate its complexity in the optimization problem by considering the binary variables needed to express its ATR. The complexity is
\begin{equation}
    \mathcal{O}(N^{|\mathcal{S}_{p}|})
\end{equation}
where $N$ is the number of segments and $|\mathcal{S}_p|$ indicates the number of agents affecting the truth value of predicate $p$. 

Though the complexity is exponential, the B\'ezier formulation with non-constant segment durations allows for a relatively low number of segments $N$, regardless of long-horizon missions, as the duration of a single temporal curve could be large. This will become apparent in the next section.

\section{Results}
\label{sec:results}
We will show how our B\'ezier MILP implementation of temporal robust motion planning is computationally faster compared to constant discretization MILP implementations in \cite{rodionova2021time} while obtaining a continuous-time underapproximation of the theoretically optimal temporal robustness.
Additionally, we show how we are able to directly optimize the ATR of a specification. 
We will compare our method to a scenario in \cite{rodionova2021time} to highlight the theoretical maximum temporal robustness and conclude by highlighting the optimization capabilities of our approach.
Note that compared to \cite{rodionova2021time}, we are unable to handle acceleration constraints and do not allow nesting of temporal operators. However, we are able to compute continuous-time plans. 
We implement the left- and/or right-temporal robustness via the counting procedure described in \cite{rodionova2021time}, but consider the non-constant B\'ezier segment duration.

All simulations are performed on an Intel Core 12700H processor with 32 Gb of RAM. The method is implemented in Python and solved with Gurobi 10.0 \cite{gurobi2021gurobi}.

\begin{figure}[t!]
    \centering
    \includegraphics[width=0.5\textwidth]{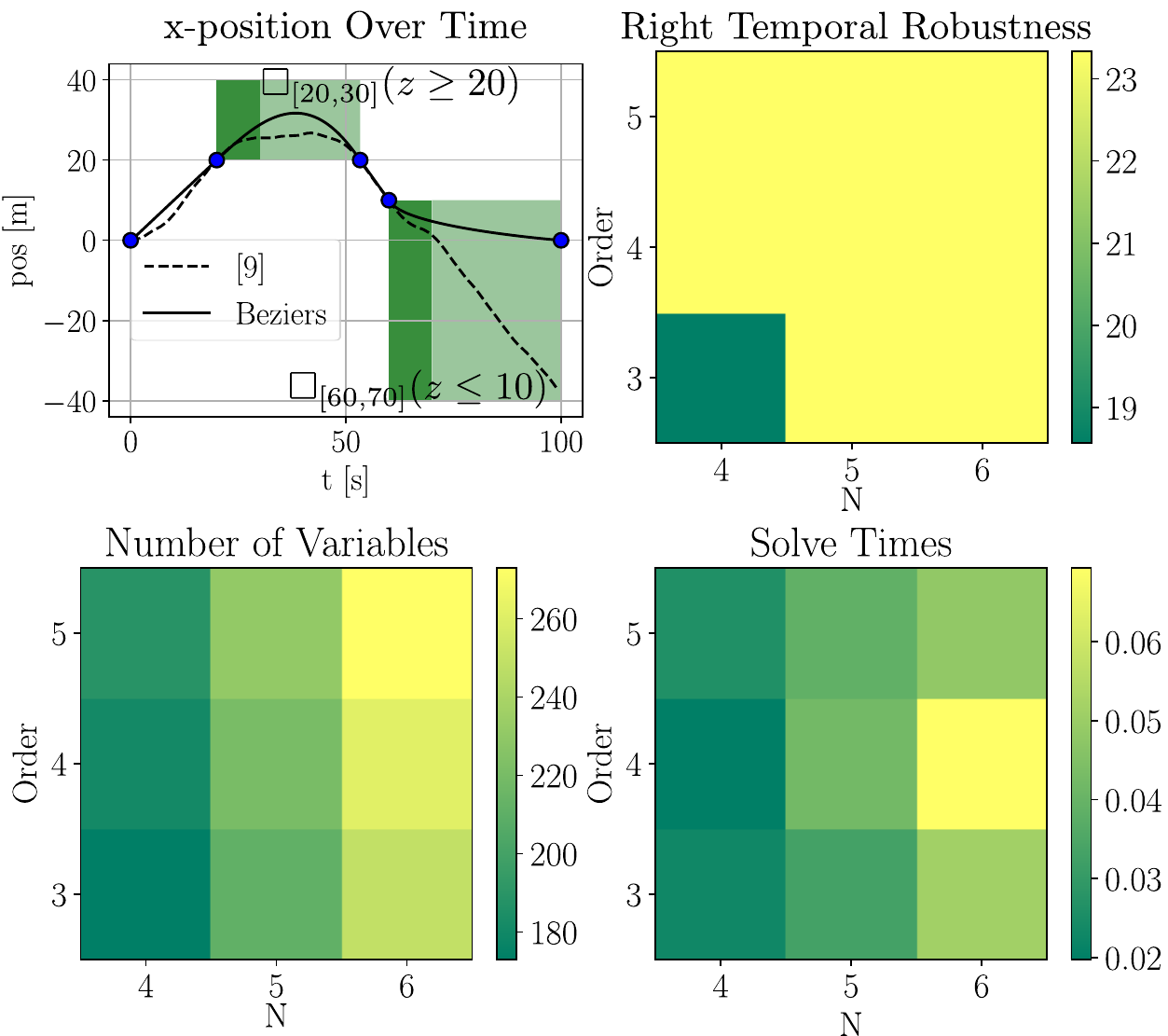}
    \caption{Single-agent STL specification $\phi_{uav}$ for $N=d=4$. Darker green indicates the predicate (and its duration), and lighter shades indicates the temporal robustness. Heatmaps indicate the dependency on the order and the number of B\'ezier curves in finding the optimal solution.}
    \label{fig:UAV}
    \vspace{-0.5cm}
\end{figure}

\subsection{Single-agent Case}
Consider the altitude control of a one-dimensional UAV moving along the $z$-axis with state $x =[z,\dot{z}]^T \in \mathbb{R}^2$, initial state $x_0 = [0,0]^T$, linear second-order dynamics, velocity bounds $|\dot{z}|_{\infty} \leq 1.5$, and the specification $\phi_{uav} = \Box_{[20,30]}(z\geq 20) \land \Box_{[60,70]}(z\leq 10)$.
We remove the control input constraints from the benchmark.
The simulation results are shown in Fig. \ref{fig:UAV}.

The MILP formulation in \cite{rodionova2021time} requires $100$ $1$-second segments and obtains a maximum right temporal robustness of $23$ seconds, limited by the discretization of 1-second intervals. Meanwhile, our B\'ezier formulation with only $4$ segments obtains the theoretically maximal right temporal robustness of $23.33$ seconds. Over 10 trials, \cite{rodionova2021time} takes an average of 0.646 seconds while our B\'ezier method takes 0.019 seconds to compute.

\subsection{Multi-agent Scenario}
To assess the capabilities of maximizing the ATR of a multi-agent specification, we present the scenario from Fig.~\ref{fig:intro} and maximize its ATR. We describe the agents as double integrators subjected to the specification
\begin{multline}
\label{eq:phi_MA}
    \phi_{MA} = \diamondsuit_{[4,8]}(\mathbf{x}_1 \in B) \land \Box_{[4,8]}(\mathbf{x}_2 \in A) \land  \\ \Box_{[12,15]}(|\mathbf{x}_1 - \mathbf{x}_2|_{\infty} \leq 1),
\end{multline}
where $\mathbf{x}_1$ and $\mathbf{x}_2 \in \mathbb{R}^2$ represent the position of robots 1 and 2 respectively. 
We consider the robots to be points for the sake of simplicity.
We obtain the ATR $\theta_{\phi_{MA}}(\mathbf{x}) = 2.03$ seconds after 43 seconds of computation.
The results are shown in Fig. \ref{fig:ATR} Notice that compared to the illustrative example in Fig. \ref{fig:intro}, the {\em Eventually} specification for agent $1$, $\diamondsuit_{[4,8]}(\mathbf{x}_1 \in B)$, motivates meeting near $A$ in order to maximize the ATR. The formulation will find the optimal meeting place in terms of temporal robustness. 


\begin{figure}[t!]
    \centering
    \includegraphics[width=0.5\textwidth]{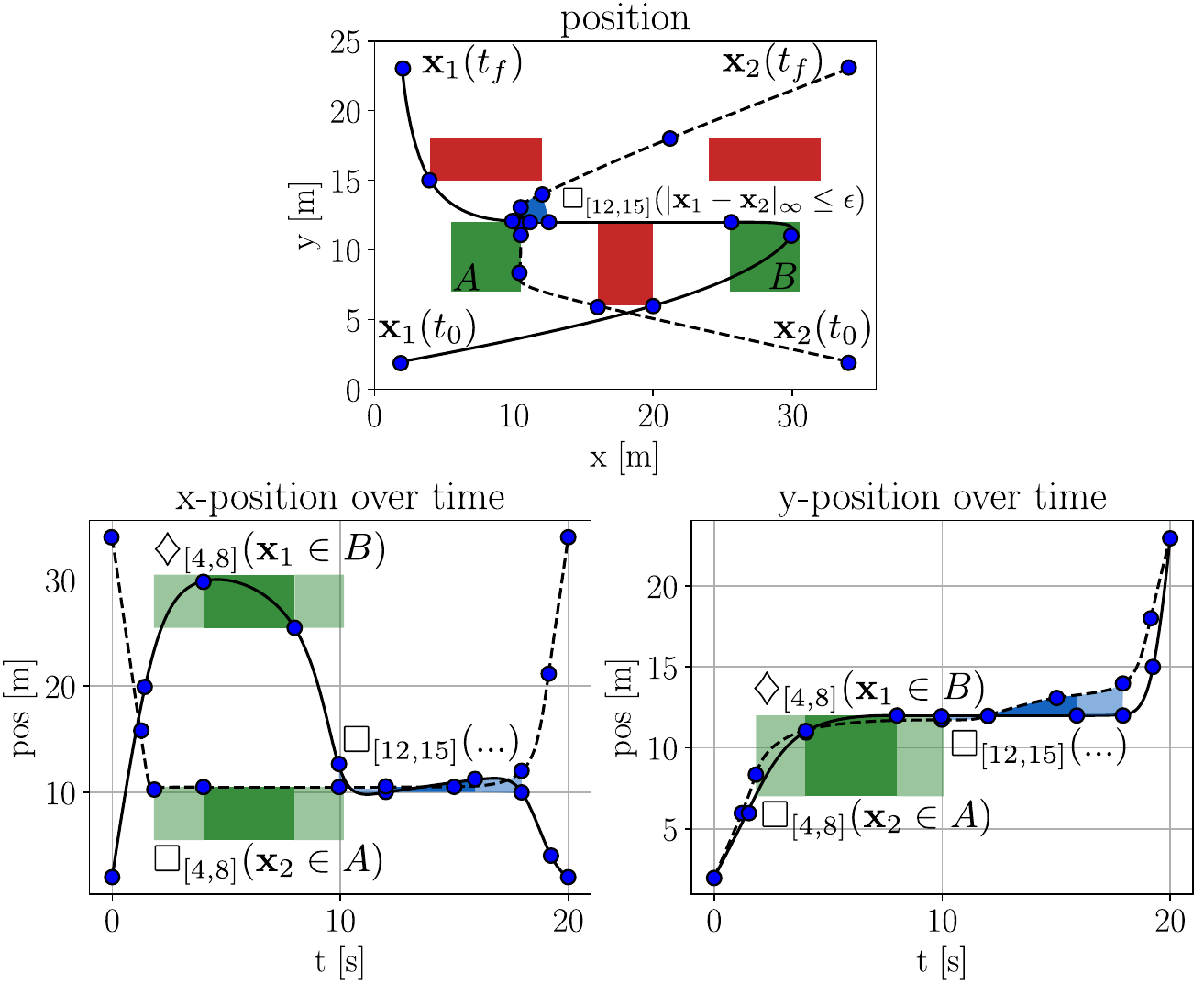}
    \caption{ATR multi-agent example from Eq. \eqref{eq:phi_MA}, Darker shades indicate the predicate (and predicate duration), and lighter shades indicate the ATR of that predicate. Blue indicates $\Box_{[12,15]}(|\mathbf{x}_1 - \mathbf{x}_2|_{\infty} \leq 1)$. The result allows any combinatorial shift of signal $\mathbf{x}_1$ and $\mathbf{x}_2$ up to $|\kappa_i| = \theta_{\phi_{MA}}(\mathbf{x}) = 2.03$.}
    \label{fig:ATR}
    \vspace{-0.5cm}
\end{figure}
\section{Conclusions}
\label{sec:conclusions}
We have presented a continuous-time motion planner that considers the maximization of the asynchronous temporal robustness of an STL specification. 
Our B\'ezier trajectory parametrization was shown to significantly speed up the robustness maximization of single-agent predicates.
We have shown the soundness of our approach to generating asynchronous temporally robust trajectories and have shown its efficacy in a complex multi-agent scenario.

Future work will involve tightening the under-approximation of the segment predicate ATR while addressing redundancy in the computation. We will also aim to make the MILP encoding more efficient. We further wish to more extensively incorporate real-world constraints and perform real-world experiments on a multi-agent system with an event-triggered replanning strategy.

\bibliographystyle{IEEEtran}
\bibliography{references}

\end{document}